\theoremstyle{definition}
\newtheorem{definition}{Definition}
\newtheorem{theorem}{Theorem}
\newtheorem*{remark}{Remark}
\begin{document}

\title{Control the GNN: Utilizing Neural Controller with Lyapunov Stability for Test-Time Feature Reconstruction}

\author{Jielong Yang, Rui Ding, Feng ji, Hongbin Wang, and Linbo Xie}


\markboth{Journal of \LaTeX\ Class Files,~Vol.~14, No.~8, August~2021}%
{Shell \MakeLowercase{\textit{et al.}}: A Sample Article Using IEEEtran.cls for IEEE Journals}


\maketitle

\begin{abstract}
  The performance of graph neural networks (GNNs) is susceptible to discrepancies between training and testing sample distributions. Prior studies have attempted to mitigating the impact of distribution shift by reconstructing node features during the testing phase without modifying the model parameters. However, these approaches lack theoretical analysis of the proximity between predictions and ground truth at test time. In this paper, we propose a novel node feature reconstruction method grounded in Lyapunov stability theory. Specifically, we model the GNN as a control system during the testing phase, considering node features as control variables. A neural controller that adheres to the Lyapunov stability criterion is then employed to reconstruct these node features, ensuring that the predictions progressively approach the ground truth at test time. We validate the effectiveness of our approach through extensive experiments across multiple datasets, demonstrating significant performance improvements.
\end{abstract}

\begin{IEEEkeywords}
graph neural network, Lyapunov stability, neural controller
\end{IEEEkeywords}

\section{Introduction}
\IEEEPARstart{G}raph neural networks (GNNs) have demonstrated significant success in various applications due to their powerful representational capabilities \cite{wu2020comprehensive, yuan2022explainability}. However, their performance suffers significantly when distribution discrepancies exist between training and testing samples \cite{qin2022graph, li2022out}. Existing approaches often tackle distribution shifts during the training phase through data augmentation \cite{sui2024unleashing, ding2022data}, causal inference\cite{fan2023generalizing}, and regularization\cite{buffelli2022sizeshiftreg}. Nevertheless, these strategies necessitate modifications to the model structure and relearning the parameters, limiting their applicability when retraining is costly or impossible.

To address this limitation, recent methods propose reconstructing node features during the testing phase, aiming to mitigate distribution shifts without altering the original model parameters. Notably, the term "node features" here refers explicitly to the original input features of the nodes. For instance, GTrans~\cite{GTrans} generates perturbed node features guided by a surrogate loss to minimize distribution shift effects, but these reconstructed features often lack practical feasibility due to arbitrary perturbations. FRGNN~\cite{FRGNN} addresses this by employing a multi-layer perceptron (MLP) to learn class-representative embeddings, substituting the original node features to enhance robustness. However, both methods rely heavily on heuristic or manually designed mechanisms, lacking theoretical guarantees regarding the proximity of reconstructed features to their ideal representations. Consequently, the interpretability and reliability of these feature reconstruction methods remain limited.

To overcome these limitations, we propose to view the node feature reconstruction problem through the lens of control theory. Specifically, we formalize GNN predictions as system states and node features as controllable inputs. Under this framework, reconstructing node features becomes a control problem, where the goal is to adjust node features precisely to minimize the discrepancy between the GNN predictions and the ground truth. Compared to heuristic optimization approaches, this perspective offers distinct advantages:
\begin{itemize}
    \item It enables precise control over individual node states, allowing fine-grained optimization of prediction accuracy.
    \item It facilitates stability analysis through established stability criteria from control theory, such as Lyapunov stability, ensuring that feature adjustments do not introduce instability or degrade prediction performance.
\end{itemize}

However, applying traditional control methods, such as proportional-integral-derivative (PID) controllers~\cite{1453566} or linear quadratic regulators (LQR)~\cite{10.5555/578807}, faces significant challenges in GNNs. Due to the complex nonlinearity and inherent message-passing mechanisms, adjusting a node's feature impacts not only its own prediction but also influences other nodes' predictions through the graph structure. This complex interdependency renders traditional control methods inadequate.

To address this challenge, we employ a data-driven approach using neural controllers, capable of learning complex and nonlinear control policies directly from data. Neural controllers have demonstrated effectiveness in controlling complex nonlinear systems, such as inverted pendulums and unmanned aerial vehicles~\cite{8676108, Kim1994NonlinearFC, gu2020uav}. However, guaranteeing the stability of neural controllers in highly interconnected systems like GNNs is non-trivial. Manually designing Lyapunov functions for these nonlinear systems is typically difficult or even infeasible. Therefore, we introduce a neural Lyapunov function to simultaneously learn with the neural controller in a data-driven manner. By co-optimizing both the neural controller and the neural Lyapunov function, we ensure that the controller adheres to Lyapunov stability criteria, guaranteeing stable and reliable adjustments of node features during testing.

Although neural Lyapunov methods have shown success in traditional control tasks~\cite{DBLP:journals/corr/abs-2005-00611, zhou2022neural, DBLP:journals/corr/abs-2109-14152}, their application to GNN node feature reconstruction remains unexplored. Our approach extends neural control methods into the domain of graph neural networks, providing a novel solution for theoretically-grounded node feature reconstruction. 

\begin{itemize}
    \item We present a novel control-theoretic paradigm to enhance GNN robustness against distribution shifts without altering model parameters. By formalizing node predictions as system states and node features as controlled variables, we introduce theoretically stability analysis into test-phase feature reconstruction.
    \item We design a neural controller specifically tailored to the nonlinear, interconnected structure of GNNs, capable of effectively reconstructing node features at test time. To guarantee stability during this reconstruction process, we further propose a neural Lyapunov function learned simultaneously with the neural controller, providing theoretical stability assurances for our method.
    \item We empirically validate our proposed framework on multiple benchmark datasets. The experimental results demonstrate that our method significantly improves GNN performance under distribution shift scenarios, effectively reconstructing meaningful node features and maintaining prediction stability without modifying the original model structure or retraining its parameters.
\end{itemize}

\section{Related Work}

\subsection{Test-Time Feature Reconstruction For GNNs}

In the task of semi-supervised node classification, GNNs often suffer from a distribution shift between the training set and testing set due to the inappropriate selection of training samples or limited training data. To tackle this issue without altering the model structure and parameters, reconstructing node features during the testing phase is effective. GTrans~\cite{GTrans} employs surrogate loss to guide the update of perturbations during the testing phase to reconstruct node features. FRGNN~\cite{FRGNN} proves that replacing the node features of labeled nodes with class representative embeddings during the testing phase enhances the model's robustness to distribution shift. Identifying appropriate class representative embeddings is crucial for this method. FRGNN maps the relationship between GNN outputs and inputs using an MLP. Then, it uses the one-hot embedding of the class as input to the MLP to obtain the class representative embeddings. However, the class representative embeddings obtained in this way do not guarantee that the new output of GNN will be close to the one-hot embedding of the corresponding class, which fails to meet the requirements of class representative embeddings. To address this issue, we design a neural controller to find appropriate class representative embeddings by ensuring that the new prediction of GNN gets close enough to the ground truth.

\begin{figure*}[ht]
    \centering
    \includegraphics[width=1\linewidth]{lypunov_gnn}
    \caption{The overall process of our method. We input the original prediction $\hat{Y}t$ into the neural controller to obtain adjusted node features, and then feed these adjusted node features back into the well-trained GNN to obtain the prediction $\hat{Y}{t+1}$ at time $t+1$. Simultaneously, we input the original prediction $\hat{Y}_t$ into the neural Lyapunov function to obtain the Lyapunov value. We then use these values to compute the Lyapunov loss and update the parameters of both the neural controller and the neural Lyapunov function. Afterward, we use a solver to check whether there are states in the state space that do not satisfy the Lyapunov conditions. If such states exist, we add them to the training set and continue training until none remain.}
    \label{fig:1}
\end{figure*}

\subsection{Neural Controller and Neural Lyapunov Function}

Lyapunov stability is commonly used in control theory to ensure system stability. However, designing Lyapunov functions of neural controllers is challenging. To address this issue, ~\cite{DBLP:journals/corr/abs-2005-00611} presents a learning framework to learn control policies and their corresponding Lyapunov functions. This framework guides model training by continuously searching for counterexamples that do not satisfy the Lyapunov condition until all states in the domain satisfy the Lyapunov condition. However, this method requires the Lyapunov function to be differentiable for counterexample search with the satisfiability modulo theories (SMT) solvers. To meet this requirement, all activation functions in ~\cite{DBLP:journals/corr/abs-2005-00611} are Tanh. To relax the restriction that the activation function must be continuous, ~\cite{DBLP:journals/corr/abs-2109-14152} proposes using Mix Integer Programs (MIP) to find counterexamples. This method allows the widely used ReLU activation function to be used. The authors rigorously prove the validity of the Lyapunov function. ~\cite{zhou2022neural} learns the unknown nonlinear system through a neural network without the need for precise system modeling. Then, the system is controlled using the framework proposed in ~\cite{DBLP:journals/corr/abs-2005-00611}. The authors theoretically guarantee the validity of the Lyapunov function under the unknown system. Although these methods simplify the design of Lyapunov functions, they solve traditional control problems rather than enhancing the neural networks' performance with Lyapunov functions. Our work is inspired by these studies. We design a neural controller to reconstruct node features to improve model prediction.

\begin{table}[]
    \centering
    \caption{Summary of variables and their meanings.}
    \begin{tabular}{lp{7cm}}
    \toprule
    \textbf{Symbol} & \textbf{Definition / Meaning} \\
    \midrule
    $h_{c_i}^*$ & Class representative embedding of node $i$ with label $c_i$, satisfying $\|C(h_{c_i}^*) - c_i\|_2 \leq \epsilon$ \\
    $C$ & Trainable classifier in GNNs (e.g., MLP) \\
    $\epsilon$ & A small positive number used as a threshold and to ensure numerical stability \\
    $c_i$ & Class label of node $i$ \\
    $\hat{Y}$ & Node predictions, $\hat{Y} = \text{GNN}(X, A)$ \\
    $X$ & Node features \\
    $A$ & Adjacency matrix of the graph \\
    GNN & Graph Neural Network model, $\text{GNN}(X, A) = C(g(X, A))$ \\
    $g$ & Aggregation function in GNNs (e.g., message-passing function) \\
    $f$ & Function to reconstruct node features based on existing predictions $\hat{Y}$ \\
    $Y$ & Ground truth labels \\
    $\hat{Y}_t$ & Predictions at time $t$ \\
    $\hat{Y}_{t+1}$ & Predictions at time $t+1$ \\
    $f_{\theta}$ & Neural controller parameterized by $\theta$ \\
    $V$ & Lyapunov function used to assess system stability \\
    $V_{\phi}$ & Neural Lyapunov function parameterized by $\phi$ \\
    $\Delta V$ & Difference of the Lyapunov function between time steps, $\Delta V = V(C_{f_{\theta}}(\hat{Y}_t)) - V(\hat{Y}_t)$ \\
    $\Delta V_{\phi}$ & Difference of the neural Lyapunov function, $\Delta V_{\phi}(\hat{Y}_t) = V_{\phi}(C_{f_{\theta}}(\hat{Y}_t)) - V_{\phi}(\hat{Y}_t)$ \\
    $\mathcal{D}$ & Domain of the control system \\
    $\sigma$ & ReLU activation function \\
    $N$ & Number of training samples \\
    $\mathcal{L}_{\phi, \theta}$ & Lyapunov loss function, $\mathcal{L}_{\phi, \theta} = \dfrac{1}{N}\sum\limits_{i=1}^{N} \left( \sigma(-V_{\phi}(\hat{Y}_t^i)) + \sigma(\Delta V_{\phi}(\hat{Y}_t^i)) \right) + V_{\phi}^2(Y)$ \\
    $r$ & Replacement function used in test-time feature reconstruction \\
    $\mathcal{C}$ & Set of states violating Lyapunov conditions, $\mathcal{C} = \left\{ \hat{Y} \ \bigg| \ \left( V_{\phi}(\hat{Y}) \leq 0 \ \cup \ \Delta V_{\phi}(\hat{Y}) \geq 0 \right) \cap \| \hat{Y}_i - Y_i \|_2 \geq \epsilon \right\}$ \\
    $\theta$ & Parameters of the neural controller $f_{\theta}$ \\
    $\phi$ & Parameters of the neural Lyapunov function $V_{\phi}$ \\
    $C_{f_{\theta}}$ & Composite function, $C_{f_{\theta}}(\hat{Y}_t) = C(f_{\theta}(\hat{Y}_t))$ \\
    \bottomrule
    \end{tabular}
    \label{tab:variables}
\end{table}

\section{Methodology}
In this section, we introduce the method of reconstructing node features using a neural controller. The overall process of this method is shown in \Cref{fig:1}. Initially, we use a well-trained GNN to obtain the node predictions. This output is then fed back into the neural controller to reconstruct the node features, ensuring the new predictions remain close to the ground truth. Notably, the term "node features" here refers explicitly to the original input features of the nodes. To ensure that the new predictions derived from the reconstructed node features remain stable near the ground truth, we introduce a neural Lyapunov function to aid in controller training. The variables and symbols used in this method are summarized in \Cref{tab:variables}. This section is organized as follows. We first explain the theoretical foundation of this method. Then, we detail the design of the neural controller and the neural Lyapunov function. Finally, we outline the training approach for this neural controller and the neural Lyapunov function.

\subsection{Problem Formulation}
In FRGNN ~\cite{FRGNN}, the authors demonstrate that replacing the features of labeled nodes with class representative embeddings during the test phase can improve the model's robustness to distribution shift. This paper provides the definition of class representative embeddings, as shown in \Cref{def:1}.

\begin{definition}
    \label{def:1}
    Class Representative Embedding $h_{c_i}^*$: Suppose the embedding $h_{c_i}^*$ of node $i$ with label $c_i$ satisfies the following conditions:
    \begin{equation}
        \label{eq:1}
        ||C(h_{c_i}^*) - c_i||_2 \leq \epsilon,
    \end{equation}
    where $C$ represents the trainable classifier in GNNs and $\epsilon$ denotes a small positive number. Then $h_{c_i}^*$ is defined as the class representative embedding of class $c_i$.
\end{definition}

We draw inspiration from the definition in FRGNN, aiming to identify class representative embeddings with theoretical guarantees. The inference process of the graph neural network is as follows:
\begin{equation}
    \label{eq:2}
    \hat{Y} = \text{GNN}(X, A)
    = C(g(X, A)),
\end{equation}
where $\hat{Y}$ represents the node predictions, $X$ denotes the node features, $A$ is the adjacency matrix of the graph, GNN represents a well-trained graph neural network, $g$ denotes any aggregation function, and $C$ represents the trainable classifier, such as MLP. Our goal is to reconstruct the node features $X$ based on the existing predictions $\hat{Y}$ to ensure that the new predictions get close to the ground truth. This reconstruction process is denoted as $f$. Formally, this can be written as the following optimization problem:
\begin{equation}
    \label{eq:3}
    \min_{f} ||C(f(g(X, A), \hat{Y})) - Y||_2,
\end{equation}
where $f$ is a function we aim to identify to minimize the objective function in \Cref{eq:3}. However, directly solving this optimization problem poses significant challenges. The compositional nature of $f \circ g$ in GNNs introduces complex interactions between node features, graph topology, and the objective. This makes convergence to a global optimum (or even meaningful local optima) theoretically and practically difficult. Directly optimizing node features via gradient-based methods often leads to unstable dynamics, as small perturbations in feature space can propagate through message passing, causing erratic updates. To address this issue, we transform the optimization challenge into a control problem. We can formalize the graph neural network as a control system, treating node features as controlled variables and the predictions as states. However, merely formulating the problem as a control system is not sufficient to guarantee convergence to $Y$ or the stability of the system. This is where the concept of a Lyapunov function becomes essential. By introducing a Lyapunov function into our control design, we can provide a systematic way to analyze and ensure the stability of the system. Therefore, we propose to simultaneously optimize the control problem and learn an appropriate Lyapunov function. This joint optimization allows us to not only find a function $f$ that minimizes the objective in \Cref{eq:3} but also ensures that the system's dynamics are stable and that the predictions converge to the ground truth $Y$. This approach leverages the strengths of control theory to overcome the limitations of directly solving the original optimization problem. The form of this control system is as follows:
\begin{equation}
    \label{eq:4}
    \begin{aligned}
        \hat{Y}_{t+1} = C(f(g(X, A),\hat{Y_t})),
    \end{aligned}
\end{equation}
where $\hat{Y_t}$ represents the predictions at time $t$, and $\hat{Y}_{t+1}$ represents the predictions at time $t+1$. This control system feeds back the predictions at time $t$ to the controller $f$. Then the controller $f$ adjusts the node features to ensure that the subsequent predictions at time $t+1$ get close to the equilibrium point, which is the ground truth. Since GNN is fixed and thus $g(X, A)$ is constant, \Cref{eq:4} can be viewed as a function of the variable $\hat{Y_t}$, as shown below:
\begin{equation}
    \label{eq:5}
    \begin{aligned}
        \hat{Y}_{t+1}= C(f(\hat{Y_t})).
    \end{aligned}
\end{equation}

By transforming the optimization problem in \Cref{eq:3}, we have shifted our focus to designing a controller $f$ and identifying a Lyapunov function. This approach ensures that the system output remains stable near the equilibrium point and that the predictions converge to the ground truth. In our framework, the equilibrium point is defined as the state where the system's output remains unchanged, satisfying $C_{f_{\theta}}(Y)=Y$. When $Y$ is the ground truth, the system $C_{f_{\theta}}(\cdot)$ takes the true label data as input. The model's predicted output will exactly match this real label, resulting in no prediction error that needs further correction. Consequently, the system's next output $C_{f_{\theta}}(Y)$ equals the current input $Y$, thus satisfying the equilibrium condition. By defining equilibrium in this way, we can leverage stability analysis to ensure that the system will drive the predictions toward the ground truth over time. In the following sections, we detail the design of the neural controller and the neural Lyapunov function and introduce the training methodology for these components.

\subsection{Theoretical Foundation}
In the previous section, we transform the GNNs optimization problem into a control problem. Now, the main challenge is designing a controller $f$ to adjust the node features so that the new predictions derived from these reconstructed features get close to the ground truth. Traditional controllers, such as PID, LQR, and AC, are not well-suited for this system due to the complex nonlinearity of graph neural networks. Moreover, unlike ordinary neural networks like MLPs, GNNs include message-passing layers, meaning that modifying a specific node feature not only affects the prediction of that node but also influences the predictions of other nodes through the graph structure. This coupling makes the control problem particularly challenging for traditional controllers. Therefore, we choose a neural controller, leveraging the powerful representation capabilities of neural networks to learn an effective controller suitable for this system. We denote the neural controller as $f_{\theta}$, where $\theta$ represents the trainable parameters. Thus, \Cref{eq:5} can be rewritten in closed-loop form as follows:

\begin{equation}
    \label{eq:6}
    \begin{aligned}
        \hat{Y}_{t+1} = C(f_{\theta}(\hat{Y_t}))\triangleq C_{f_{\theta}}(\hat{Y_t}).
    \end{aligned}
\end{equation}

To ensure the stability of this controller, we employ the Lyapunov stability theory. For a nonlinear closed-loop system, we can use Lyapunov stability criteria to determine system stability. This criterion leverages a scalar function $V(x)$ to assess system stability, known as the Lyapunov function. To apply the Lyapunov stability criterion to the closed-loop system in \Cref{eq:6}, we introduce the following theorem.

\begin{theorem}
    \label{thm1}
    Consider a closed-loop system $\hat{Y}_{t+1} = C_{f_{\theta}}(\hat{Y_t})$, where $C_{f_{\theta}}$ is a MLP with ReLU activation function. Consider the system has an equilibrium point at the ground truth $Y$. Assume there exists a continuously differentiable function $V: \mathbb{R}^n \rightarrow \mathbb{R}$ that satisfies the following conditions: 
    \begin{equation}
        \label{eq:7}
        V(Y) = 0, and, \forall \hat{Y} \in \mathcal{D}\setminus \{Y\}, V(\hat{Y}) > 0 \: and \: \Delta V \leq 0.
    \end{equation}
    where $\mathcal{D}$ is the domain of the control system and $\Delta V$ is defined as:
    \begin{equation}
        \label{eq:8}
        \begin{aligned}
            \Delta V = V(C_{f_{\theta}}(\hat{Y_t})) - V(\hat{Y_t}).
        \end{aligned}
    \end{equation}
    Then, the system is asymptotically stable at $Y$.
\end{theorem}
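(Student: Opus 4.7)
The plan is to replicate the classical discrete-time Lyapunov stability theorem tailored to the closed loop in \Cref{eq:6}. Asymptotic stability at $Y$ decomposes into two pieces: stability in the sense of Lyapunov, meaning trajectories starting near $Y$ stay near $Y$ for all $t$, and attractivity, meaning $\hat{Y}_t \to Y$. The two ingredients available are positive-definiteness of $V$ around $Y$ together with $V(Y)=0$, and monotone non-increase of $V$ along trajectories via $\Delta V \le 0$. The ReLU-MLP assumption on $C_{f_\theta}$ will enter only qualitatively, through continuity of the map $\hat{Y}_t \mapsto \hat{Y}_{t+1}$.

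For the stability part, I would fix an arbitrary $\varepsilon>0$ with $\overline{B_\varepsilon(Y)}\subset\mathcal{D}$ and set $\alpha := \min_{\|\hat{Y}-Y\|=\varepsilon} V(\hat{Y})$; this minimum is strictly positive and attained because $V$ is continuous and positive-definite on the compact sphere. By continuity of $V$ at $Y$ pick $\delta\in(0,\varepsilon)$ so that $\|\hat{Y}_0-Y\|<\delta$ forces $V(\hat{Y}_0)<\alpha$. Let $U$ be the connected component of $\{\hat{Y}\in B_\varepsilon(Y): V(\hat{Y})<\alpha\}$ containing $Y$; since $V\ge\alpha$ on the bounding sphere, $\overline{U}\subset B_\varepsilon(Y)$. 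Induction using $\Delta V\le 0$ keeps $V(\hat{Y}_t)<\alpha$ for every $t$, hence $\hat{Y}_t\in U$, which gives Lyapunov stability. The subtlety here is the discrete-time jump issue: one has to argue through the forward-invariant sublevel set $U$ rather than directly with norms, because a single step can in principle jump across the sphere if handled carelessly.

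For attractivity, the sequence $\{V(\hat{Y}_t)\}$ is nonnegative and nonincreasing, so it converges to some $V^*\ge 0$. By the stability step the orbit lies in the compact set $\overline{U}$, so there is a subsequence $\hat{Y}_{t_k}\to \hat{Y}^*$ with $V(\hat{Y}^*)=V^*$ by continuity. If $V^*>0$, then $\hat{Y}^*\neq Y$, and I will read \Cref{eq:7} in the standard asymptotic-stability sense, namely $\Delta V<0$ off the equilibrium, which is consistent with how the training loss in \Cref{tab:variables} penalizes $\sigma(\Delta V_\phi)$. Continuity of the map $V\circ C_{f_\theta} - V$ then yields a neighborhood of $\hat{Y}^*$ on which $\Delta V\le -\eta<0$ for some $\eta>0$; since infinitely many iterates land there, telescoping $V(\hat{Y}_{t+1})-V(\hat{Y}_t)$ forces $V(\hat{Y}_t)\to-\infty$, a contradiction. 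Hence $V^*=0$, positive-definiteness gives $\hat{Y}^*=Y$, and stability upgrades subsequential convergence to convergence of the whole sequence.

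The main obstacle is precisely the gap between the literal hypothesis $\Delta V\le 0$ and the strict decrease that the attractivity step needs. I would resolve it either by (i) explicitly strengthening the hypothesis to $\Delta V<0$ on $\mathcal{D}\setminus\{Y\}$, or (ii) invoking a LaSalle-type invariance principle to show that the largest forward-invariant set contained in $\{\Delta V=0\}$ reduces to $\{Y\}$. Both readings recover asymptotic stability; the strict-inequality interpretation is cleaner and matches how the neural Lyapunov function is actually trained in the sequel.
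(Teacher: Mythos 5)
Your route is genuinely different from the paper's. The paper's Appendix~A proof never re-derives the Lyapunov criterion: it shows that $C_{f_{\theta}}$, being a ReLU MLP, is (locally) Lipschitz --- via the $1$-Lipschitzness of ReLU and the product of weight-matrix norms --- and then simply invokes the standard discrete-time Lyapunov stability criterion from a control reference with $F=C_{f_{\theta}}$. You instead prove the criterion itself from first principles (sublevel sets for stability; monotone convergence of $V(\hat{Y}_t)$ plus a contradiction for attractivity), using the network structure only through continuity. Your version is more self-contained and makes explicit which hypothesis does which job; in particular it surfaces the point the paper glosses over: with the literal hypothesis $\Delta V \leq 0$ in \Cref{eq:7} one obtains only stability in the sense of Lyapunov, not asymptotic stability (a nontrivial invariant set on which $V$ is constant is not excluded). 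Your strengthening to $\Delta V<0$ off the equilibrium, or a LaSalle invariance step, is genuinely necessary, and the paper's citation-style proof inherits exactly the same imprecision since it restates the criterion with the non-strict inequality.

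One step of your stability argument does not hold as written, however. From $V(\hat{Y}_{t+1})\leq V(\hat{Y}_t)<\alpha$ you conclude $\hat{Y}_{t+1}\in U$, the connected component of $\{\hat{Y}\in B_\varepsilon(Y):V(\hat{Y})<\alpha\}$ containing $Y$; but in discrete time nothing confines the iterate to that component --- it may land in a different component of the sublevel set, or outside $B_\varepsilon(Y)$ altogether at a point of $\mathcal{D}$ where $V$ happens to be below $\alpha$, since positive definiteness does not force $V$ to be large far from $Y$. The component device is the continuous-time fix; in discrete time forward invariance of $U$ must be proved, and this is exactly where the continuity of $C_{f_{\theta}}$ at $Y$ (which you mention but never deploy, and which is all the paper's Lipschitz computation is really for) enters. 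The standard repair: since $C_{f_{\theta}}(Y)=Y$ and $C_{f_{\theta}}$ is continuous, choose $\delta_2\in(0,\varepsilon)$ with $\lVert C_{f_{\theta}}(\hat{Y})-Y\rVert_2<\varepsilon$ whenever $\lVert \hat{Y}-Y\rVert_2<\delta_2$, set $\alpha=\min_{\delta_2\leq\lVert \hat{Y}-Y\rVert_2\leq\varepsilon}V(\hat{Y})>0$, and pick $\delta$ with $V<\alpha$ on $B_\delta(Y)$. If $\lVert\hat{Y}_t-Y\rVert_2<\delta_2$ and $V(\hat{Y}_t)<\alpha$, then $\hat{Y}_{t+1}\in B_\varepsilon(Y)$ and $V(\hat{Y}_{t+1})<\alpha$, so $\hat{Y}_{t+1}$ cannot lie in the annulus and must lie in $B_{\delta_2}(Y)$; induction closes, giving stability and a compact forward-invariant set on which your attractivity argument (under strict decrease or LaSalle) goes through unchanged.
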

\begin{proof}
Please refer to the Appendix A.
\end{proof}
\textbf{Remark} (Lyapunov Function and Stability Criterion) Consider a continuously differentiable function $V: \mathbb{R}^n \rightarrow \mathbb{R}$ that maps the state space to a scalar value. For a closed-loop system $\hat{Y}_{t+1} = C_f^\theta(\hat{Y}_t)$ with an equilibrium point at $Y^*$, the system is asymptotically stable at $Y^*$ if $V$ satisfies: (1) $V(Y^*) = 0$, (2) $V(\hat{Y}) > 0$ for all $\hat{Y} \neq Y^*$, and (3) $\Delta V(\hat{Y}) = V(\hat{Y}_{t+1}) - V(\hat{Y}_t) < 0$ for all $\hat{Y} \neq Y^*$. Here $V$ is called a Lyapunov function for the system.

\Cref{thm1} guarantees the stability of the system if the Lyapunov function $V$ satisfies the conditions in \Cref{eq:7}. Our objective is to identify a Lyapunov function $V$ that satisfies \Cref{eq:7}. However, designing a Lyapunov function for nonlinear systems is challenging. To address this issue, we introduce a neural Lyapunov function, which approximates the Lyapunov function through a neural network to derive a candidate Lyapunov function. We denote this learnable candidate neural Lyapunov function as $V_{\phi}$, where $\phi$ represents the trainable parameters. The architecture of the neural network is illustrated in \Cref{fig:2}. The neural Lyapunov function $V_{\phi}$ takes the predictions $\hat{Y_t}$ at time $t$ as input and outputs two components. The first component is the Lyapunov function $V_{\phi}(\hat{Y_t})$, which is used to assess system stability. The second component is the output of the neural controller $f_{\theta}(\hat{Y_t})$, which is used to adjust the node features. 

\begin{figure}
    \centering
    \includegraphics[width=1\linewidth]{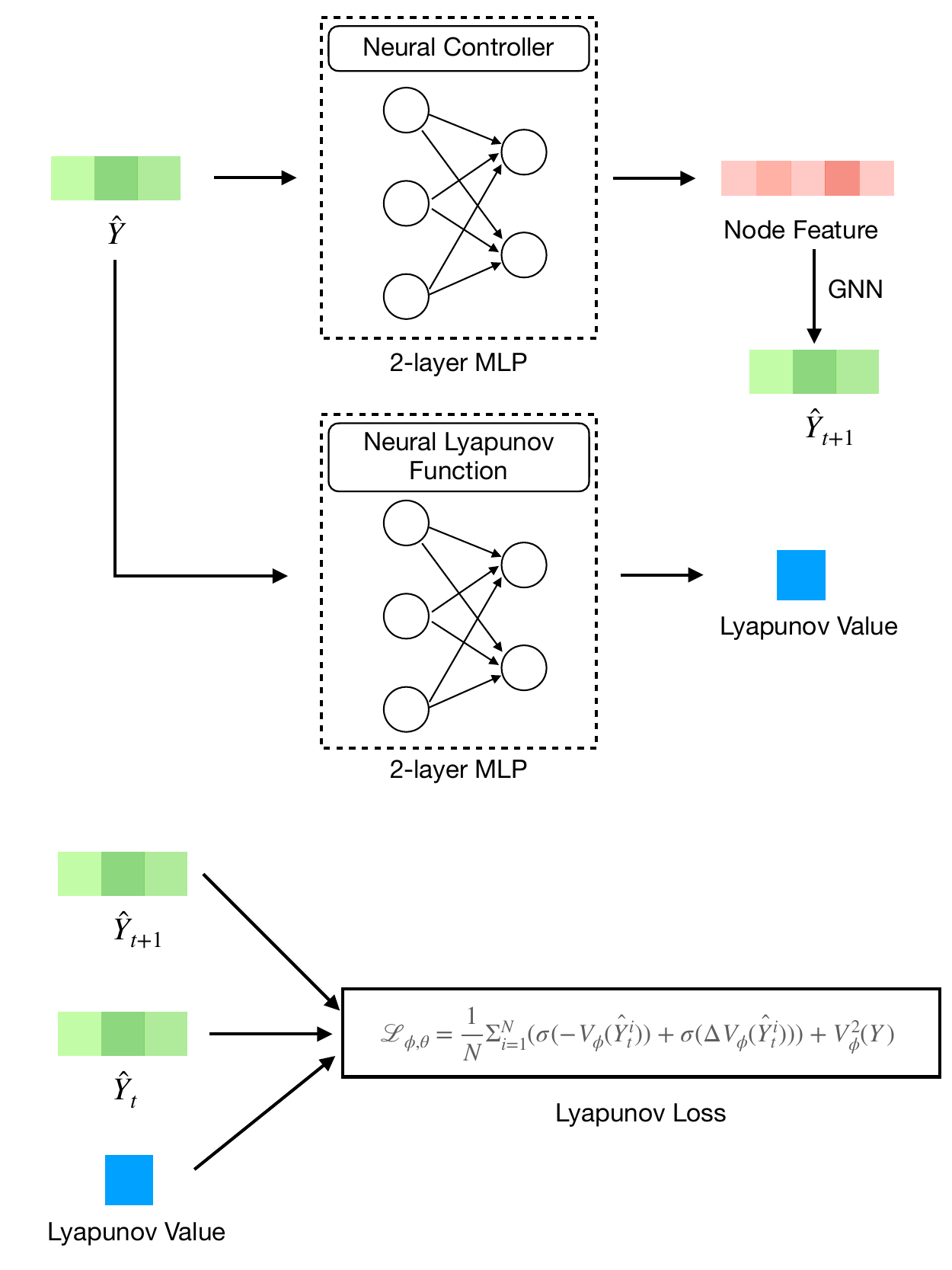}
    \caption{The architecture of the neural controller and the neural Lyapunov function and Lyapunov loss. We simultaneously train the neural Lyapunov function and the neural controller to obtain a control system that ensures stability and effectively guides the GNN predictions to gradually approach the ground truth.}
    \label{fig:2}
\end{figure}

\subsection{Training Methodology}
To obtain the candidate Lyapunov function $V_{\phi}$ for the neural controller $f_{\theta}$, we need to train two neural networks simultaneously, one for the neural controller $f_{\theta}$ and the other for the neural Lyapunov function $V_{\phi}$. Our objective is to find an $f_{\theta}$ that allows the system output to asymptotically stabilize at the equilibrium point (ground truth). The training goal is to ensure that all states $\hat{Y}$ within the domain satisfy the Lyapunov stability criterion. To this end, we design the following Lyapunov loss function:
\begin{equation}
    \label{eq:11}
    \begin{aligned}
 \mathcal{L}_{\phi, \theta} = \frac{1}{N}\Sigma_{i=1}^{N}(\sigma(-V_{\phi}(\hat{Y_t^i})) + \sigma(\Delta V_{\phi}(\hat{Y_t^i}))) + V_{\phi}^2(Y),
    \end{aligned}
\end{equation}
where $N$ represents the number of training samples, $\sigma$ denotes the ReLU activation function, $\sigma(-V_{\phi}(\hat{Y_t^i}))$ ensures that the neural Lyapunov function maps all states except the equilibrium point to positive values, satisfying the Lyapunov stability criterion $\forall x \in \mathcal{D}\setminus \{x_0\}, V(x) > 0$. $\sigma(\Delta V_{\phi}(\hat{Y_i}))$ ensures that the neural Lyapunov function has smaller values closer to the equilibrium point, satisfying the Lyapunov stability criterion $\Delta V \leq 0$. $V_{\phi}^2(Y)$ ensures that the Lyapunov function approaches zero at the equilibrium point, satisfying the Lyapunov stability criterion $V(x_0) = 0$.

The above process defines the training loss function, while we also need to design a method for obtaining training samples. Since the domain of the Lyapunov function is the entire state space, exploring every possible state is infeasible. Therefore, we use an analytical approach to identify counterexamples that violate the Lyapunov criteria within this domain, generating our training samples. \Cref{eq:7} defines the constraints that the Lyapunov function should satisfy. We take the negation of this equation to obtain states that violate the Lyapunov criterion. We define these states as follows:
\begin{equation}
    \label{eq:12}
    \begin{aligned}
        \mathcal{C} = \{\hat{Y} | (V_{\phi}(\hat{Y}) \leq 0 \cup \Delta V_{\phi}(\hat{Y}) \geq 0) \cap \lVert \hat{Y_{i}}-Y_i \rVert _2 \geq \epsilon\},
    \end{aligned}
\end{equation}
where $Y$ is the equilibrium point and $\epsilon$ is a small positive number. We use the SMT solver to find the counterexamples that do not satisfy $\cal C$. The condition $V_{\phi}(\hat{Y}) \leq 0 \cup \Delta V_{\phi}(\hat{Y}) \geq 0$ restricts the found state $\hat{y}$ from satisfying the conditions in \Cref{eq:7}. The constraint $\lVert \hat{Y_{i}}-Y_i \rVert _2 \geq \epsilon$ ensures numerical stability by excluding states too close to equilibrium, which may otherwise introduce numerical instability  during the SMT solving process. It is important to emphasize that these counterexamples are not limited to predictions generated from the original well-trained GNN. Instead, we employ an SMT solver to systematically explore a broader space of node prediction states, intentionally enrich the training set for training the neural controller $f_{\theta}$ and the neural Lyapunov function $V_{\phi}$. Our training process is detailed in \Cref{alg:1}.

\begin{algorithm}[ht]
    \caption{Counterexample-Guided Neural Lyapunov Function Training}
    \label{alg:1}
    \begin{algorithmic}[1]
    \Function{Training}{$\hat{Y}, Y_i$}
        \State $\Delta V_{\phi} = V_{\phi}(C(f_{\theta}(\hat{Y}))) - V_{\phi}(\hat{Y})$
        \State $\mathcal{L}_{\phi, \theta} = \frac{1}{N}\Sigma_{i=1}^{N}(\sigma(-V_{\phi}(\hat{Y_i})) + \sigma(\Delta V_{\phi}(\hat{Y_i})))+V_{\phi}^2(Y_i)$
        \State $\theta \leftarrow \theta - \alpha \nabla_{\theta}\mathcal{L}$
        \State $\phi \leftarrow \phi - \alpha \nabla_{\phi}\mathcal{L}$\\
        \Return $V_{\phi}, f_{\theta}$
    \EndFunction\\
    \Function{Counter Generation}{$V_{\phi}, f_{\theta}$}
        \State $\mathcal{C} = \{y | (V_{\phi}(y) \leq 0 \cup \Delta V_{\phi}(y) \geq 0) \cap \lVert y-Y_i \rVert _2 \geq \epsilon\}$
        \State Use SMT solver to find $y$ that do not satisfy $\mathcal{C}$\\
        \Return Counterexamples $y$
    \EndFunction\\

    \Function{Main}{}
        \State Prepare closed-loop system $C$ and $\hat{Y}$ derived from well-trained GNN
        \State Initialize $V_{\theta}$ and initialize $f_{\theta}$ randomly
        \While{$y$ is not empty}
            \State $V_{\phi}, f_{\theta} \leftarrow$ \Call{Training}{$\hat{Y}, Y_i$}
            \State $y \leftarrow$ \Call{Counter Generation}{$V_{\phi}, f_{\theta}$}
            \State $\hat{Y} \leftarrow \hat{Y} \cup y$
        \EndWhile
    \EndFunction
    \end{algorithmic}
\end{algorithm}

\subsection{Test-Time Feature Reconstruction}
After training the neural controller $f_{\theta}$ and the neural Lyapunov function $V_{\phi}$, we can use the neural controller to adjust the node features during the testing phase. The inference formula is as follows:
\begin{equation}
    \label{eq:13}
    \begin{aligned}
        h_{c_i}^* &= f_{\theta}(Y),\\
        \hat{Y} &= \text{GNN}(r(h_{c_i}^*, X), A),
    \end{aligned}
\end{equation}
where $h_{c_i}^*$ represents the class representative embedding of class $c_i$ and $r$ denotes the replacement function. The neural controller $f_{\theta}$ adjusts the node features to obtain the class representative embeddings $h_{c_i}^*$ and then we replace the features of labeled nodes with $h_{c_i}^*$. This process enhances the model's robustness to distribution shift.

It is important to note that we do not need to perform multiple iterations during testing. The reason is that the trained neural controller and neural Lyapunov function have already stabilized the control system. When the system is at the equilibrium point, feeding this state back into the controller results in control inputs that do not alter the equilibrium point. Therefore, a single adjustment suffices to bring the node features to stable class representative embeddings, and further iterations are unnecessary.

\begin{table*}[htp]
    \centering
    \caption{Results(\%) for the semi-supervised node classification task on the four datasets. Training split represents the method for selecting training nodes. 'Bias' means that training nodes are selected using the Scalable Biased Sampler~\cite{zhu2021shift}. Biased training samples simulate the situation with distribution shift. Each result is reported as an average $\pm$ standard deviation across 10 experiments. The footnote of FRGNN indicates the base model of FRGNN. $*$ indicates that the FRGNN only replaces one class of node features. OOM indicates that the method runs out of memory. The best results are highlighted in bold and the second-best results are underlined.}
    \begin{tabular}{lllllll}
    \hline
    Method(Year) & Training split & Cora & Citeseer & Pubmed & Photo & Computers\\ \hline
    SGC(2019) & Bias & 64.36 $\pm$ 2.90 & 60.46 $\pm$ 2.06 & 50.68 $\pm$ 3.00 & 83.69 $\pm$ 0.37 & 69.35 $\pm$ 0.20\\
    MH-AUG(2021)& Bias & 62.98 $\pm$ 4.05 & 61.40 $\pm$ 1.08 & 51.31 $\pm$ 4.02 & 84.44 $\pm$ 3.71 & 69.74 $\pm$ 2.56\\
    KDGA(2022)& Bias & 66.27 $\pm$ 2.82 & 57.63 $\pm$ 3.01 & \textbf{OOM} & \textbf{OOM} & \textbf{OOM}\\
    $\text{FRGNN}_{\text{SGC}}^*$(2024)& Bias & 64.89 $\pm$ 3.08 & 60.97 $\pm$ 1.89 & 51.27 $\pm$ 3.28 & \underline{84.73 $\pm$ 0.27} & 69.28 $\pm$ 0.34\\
    $\text{FRGNN}_{\text{GCN}}^*$(2024)& Bias & 65.12 $\pm$ 2.89 & 61.02 $\pm$ 2.33 & \underline{52.37 $\pm$ 2.69} & 84.21 $\pm$ 0.35 & 69.59 $\pm$ 0.22\\
    $\text{Ours}_{\text{NC}}$ &Bias & \textbf{67.22 $\pm$ 2.27} & \textbf{62.54 $\pm$ 1.82} & \textbf{52.90 $\pm$ 2.91} & \textbf{84.96 $\pm$ 0.29} & \textbf{70.64 $\pm$ 0.19}\\  \hline
    \label{tab1}
    \end{tabular}
\end{table*}

\section{Experiments}
In this section, we evaluate our method across multiple datasets to demonstrate its effectiveness in reconstructing node features for enhancing GNNs performance. We begin by describing the experimental setup and then present detailed results. Additionally, we visualize predictions before and after reconstruction to assess the impact of our method on GNN predictions. Finally, we analyze the validity of the neural Lyapunov function through visualizations.

\subsection{Experimental Setup}
\subsubsection{Base Model}
We chose SGC ~\cite{pmlr-v97-wu19e} as our base model. SGC is a simple graph neural network that reduces the multi-layer convolution operations of GCN ~\cite{kipf2017semi} to single matrix multiplication, making it more efficient. Our method requires using an SMT solver to identify states that violate Lyapunov constraints, so we must keep the SMT solving time within an acceptable range. Thus, we select SGC as the base model to accelerate the solving process by reducing the graph neural network's parameters.

\subsubsection{Datasets}
We evaluate our method on five popular graph datasets: Cora, Citeseer, Pubmed~\cite{10.5555/3045390.3045396}, Amazon Photo, and Amazon Computers~\cite{shchur2019pitfallsgraphneuralnetwork}. These datasets are standard benchmarks for semi-supervised node classification tasks. The statistics of these datasets are summarized in Appendix B. For each dataset, we follow the setup from FRGNN ~\cite{FRGNN} and apply the Scalable Biased Sampler~\cite{zhu2021shift} to create training, validation, and test sets. This sampling method effectively captures datasets with distribution shifts, allowing us to demonstrate the ability of our method to address such issues.

\begin{table}[]
    \centering
    \caption{Classification accuracy comparison between our neural controller and traditional controller on three datasets. Each result is reported as an average $\pm$ standard deviation across 10 experiments. MFAC indicates that the controller is MFAC and NC indicates that the controller is our neural controller. The best results are highlighted in bold.}
    \begin{tabular}{llll}
    \hline
    Method  & Cora & Citeseer & Pubmed\\ \hline
    SGC & 64.36 $\pm$ 2.90 & 60.46 $\pm$ 2.06 & 50.68 $\pm$ 3.00\\
    $\text{Ours}_{\text{MFAC}}$ & 63.13 $\pm$ 3.42 & 56.73 $\pm$ 3.80 & 51.97 $\pm$ 2.78\\
    $\text{Ours}_{\text{NC}}$ & \textbf{67.22 $\pm$ 2.27} & \textbf{62.54 $\pm$ 1.82} & \textbf{52.90 $\pm$ 2.91}\\  \hline
    \label{tab3}
    \end{tabular}
\end{table}

\subsubsection{Implementation Details}
Our training process for the neural controller and the neural Lyapunov function heavily relies on generating counterexamples, which necessitates ensuring the speed of the SMT solver. Since the speed of SMT solver is proportional to the number of parameters in each graph neural network layer, we downsample the node features to reduce the parameter count. We apply PCA to reduce the feature dimensions to a fixed 20 dimensions, making it the input size for all experiments. We use SGC as the base model, setting its steps of feature propagation to 3 and its output size to the number of classes. Our neural controller $f_{\theta}$ is a two-layer MLP with a hidden size of 16. Our neural Lyapunov function $V_{\phi}$ is also a two-layer MLP with a hidden size of 16 and an output size of 1. We train both $f_{\theta}$ and $V_{\phi}$ using the Adam optimizer with a learning rate of 0.001. We conduct all experiments on an RTX 3090 24G GPU.

During the training phase, we randomly select a node as the initial node and use the one-hot embedding of its class as the equilibrium point $Y$. We use the initial prediction $\hat{Y}$ of this node as the initial state. We then adjust the node features through the neural controller $f_{\theta}$ and input the reconstructed node features into the well-trained GNN to obtain the next prediction $\hat{Y}_{t+1}$. With this information, we calculate the Lyapunov loss and update the neural controller $f_{\theta}$ and the neural Lyapunov function $V_{\phi}$. Then we use the SMT solver to find counterexamples that violate the Lyapunov criterion. When the solver finds such states within the domain, we add these states as counterexamples to the training set. We repeat this process until all states within the domain satisfy the Lyapunov stability criterion. At this point, we obtain a neural controller $f_{\theta}$ that satisfies the Lyapunov stability criterion.

During the testing phase, we input the one-hot embedding of the selected node into the neural controller $f_{\theta}$ to obtain the reconstructed node features. These reconstructed features are then used as class representative embeddings for the selected class, replacing the features of labeled nodes within that class. Finally, we input these updated node features into the well-trained GNN to assess the model's performance on the test set.

\begin{remark}
    In this work, we focus specifically on reconstructing node features belonging to a single node class, treating the ground truth labels of this class as the sole equilibrium point in our Lyapunov stability framework. While extending this method to multiple classes (with multiple equilibrium points) is theoretically straightforward, such an extension significantly increases the complexity of the SMT solving process in Algorithm ~\Cref{alg:1}. Investigating efficient SMT solver to handle multiple equilibrium points requires future research.
\end{remark}

\begin{figure}[htbp]
    \centering
    \includegraphics[width=1\linewidth]{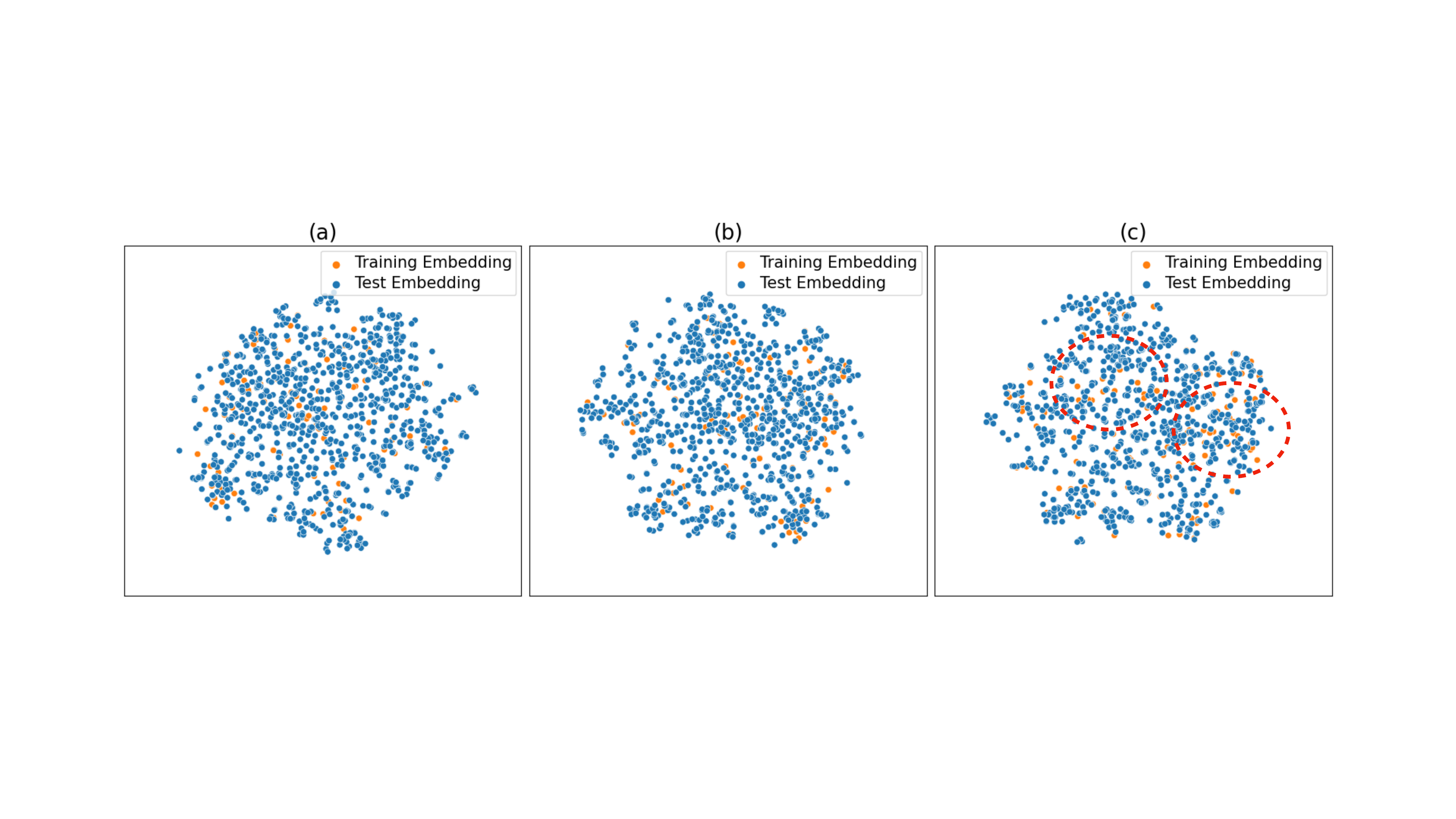}
    \caption{Visualization of the Cora dataset after dimensionality reduction with t-SNE. (a) Node embeddings derived from the original node features; (b) Node embeddings derived from the reconstructed node features by FRGNN; (c) Node embeddings derived from the reconstructed node features by our method. Orange points represent the training node embeddings, while blue points depict the test node embeddings. The red dashed line highlights that our method effectively aligns the embeddings of test nodes more closely with those of the training nodes compared to the original GNN and FRGNN. }
    \label{fig:3}
\end{figure}

\subsubsection{Baseline}
We select MH-AUG ~\cite{c:23}, KDGA ~\cite{c:24}, and FRGNN ~\cite{FRGNN} as the baselines. More detail about these methods is provided in the Appendix C. 

We also compare our neural controller with traditional controllers. PID and LQR are not suitable for this system due to the complex nonlinearity of GNNs. We adopt the MFAC as the controller for test-time feature reconstruction.

\begin{remark}
    Our chosen baselines include both methods that directly reconstruct node features without modifying model parameters or structure (e.g., FRGNN), as well as more sophisticated methods that utilize data augmentation, knowledge distillation, or other techniques (e.g., MH-AUG, KDGA). The motivation for selecting such diverse baselines is to comprehensively illustrate the fundamental differences between our proposed control-theoretic approach and existing mainstream methods designed to mitigate distribution shifts from various perspectives.

    However, due to computational constraints introduced by the SMT solver, our experimental validation is necessarily conducted on relatively small datasets. Like many other works ~\cite{velivckovic2020pointer, li2024another, mustafa2023gats}, our primary objective is not to outperform existing state-of-the-art graph neural networks on standard datasets, but rather to validate the theoretical correctness and practical effectiveness of our novel approach of applying our novel framework to existing GNNs, specifically the application of control theory and neural Lyapunov stability analysis to GNN feature reconstruction problems under distribution shifts.
\end{remark}

\subsection{Results}

\subsubsection{Performance Comparison}

\begin{figure}[htbp]
    \centering
    \includegraphics[width=1\linewidth]{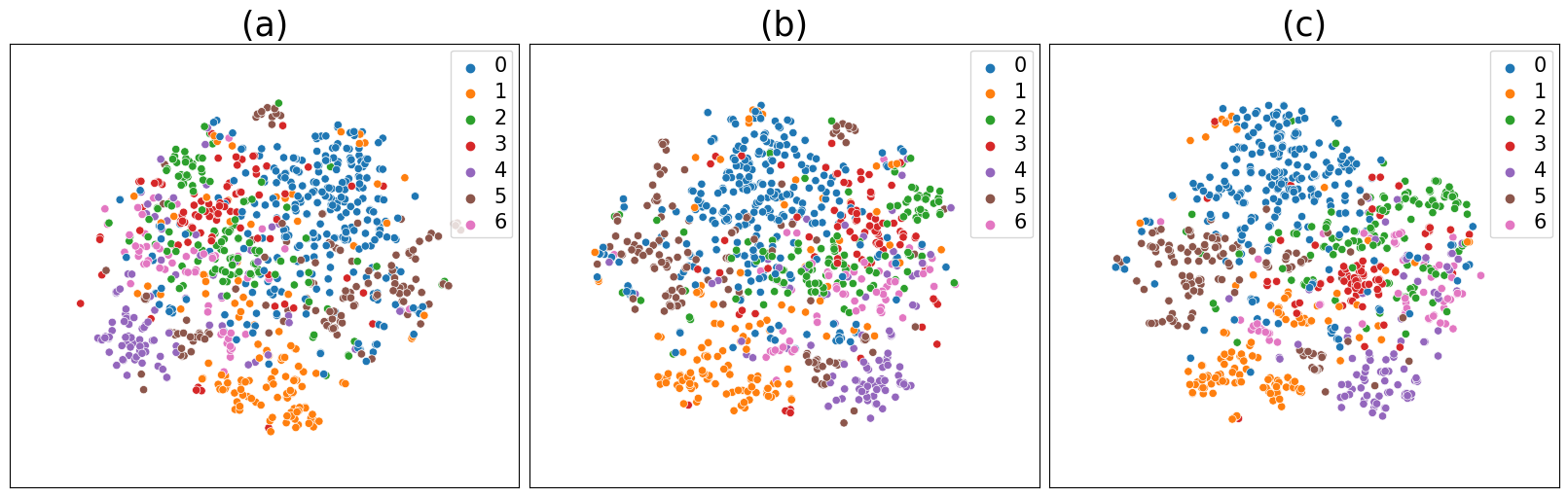}
    \caption{Visualization of the Cora dataset after dimensionality reduction with t-SNE. (a) Node embeddings derived from the original node features; (b) Node embeddings derived from the reconstructed node features by FRGNN; (c) Node embeddings derived from the reconstructed node features by our method. Different colore indicate different categories. Our method shows better clustering of node embeddings for each class than the original GNN and FRGNN.}
    \label{fig:4}
\end{figure}

We evaluated our method on the Cora, Citeseer, Pubmed, Amazon Photo, and Amazon Computers datasets, with results shown in \Cref{tab1}. Our approach outperformed SGC across all datasets, demonstrating its effectiveness and robustness across different statistical characteristics.

Unlike MH-AUG and KDGA, which improve robustness by augmenting data during training, our method enhances robustness by reconstructing features during the testing phase. Our approach consistently achieved superior performance on all datasets compared to MH-AUG and KDGA, indicating the benefits of feature reconstruction at test time.

We further compare our method with FRGNN. FRGNN trains an MLP with GNN predictions and node features to generate class representative embeddings. This paradigm makes FRGNN dependent on the Lipschitz condition, which imposes strict requirements on the MLP's training data. If GNN predictions deviate significantly from the ground truth, it may struggle to find suitable class representative embeddings. Our approach formalizes the graph neural network as a control system and reconstructs node features through a neural controller to ensure the new predictions close to the ground truth. The node features reconstructed by our method can better satisfy the definition of class representative embeddings than FRGNN. Therefore, when replacing the features of a single class of labeled nodes, our method outperforms FRGNN on all datasets.

To further demonstrate the effectiveness of our controller, we compare it with traditional controllers on three datasets, as shown in \Cref{tab3}. For such a control system with complex nonlinearity and coupling, MFAC fails to find the class representative embeddings, leading to a significant performance drop. Our neural controller outperforms MFAC on all datasets, indicating the superiority of neural controllers in adjusting node features for GNNs.

\subsubsection{Visualization}

\begin{figure}[htbp]
    \centering
    \includegraphics[width=1\linewidth]{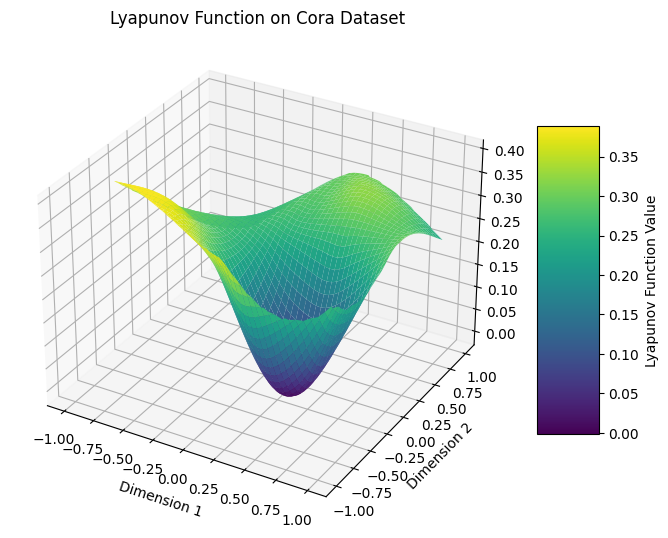}
    \caption{Visualization of Lyapunov function value. Dimension 1 represents dimension 1 of the t-SNE reduced state, Dimension 2 represents dimension 2 of the t-SNE reduced state, and the color represents the value of the Lyapunov function. The Lyapunov function values are zero near the equilibrium point and increase as they move away, indicating that the learned neural Lyapunov function meets the Lyapunov stability criterion.}
    \label{fig:5}
\end{figure}

To illustrate the performance gains in classification from replacing node features, we conduct a t-SNE visualization of node embeddings on the Cora dataset, as shown in \Cref{fig:3}. The figure displays the original node embeddings, those reconstructed by FRGNN, and those reconstructed by our method. In this visualization, blue points represent test nodes and orange points represent training nodes. The results in this figure demonstrate that our method effectively aligns the embeddings of test nodes more closely with those of the training nodes compared to the original GNN and FRGNN scenarios. This alignment reduces the distribution shift between training and test nodes, enhancing the ability of the classifier to accurately categorize embeddings. Therefore our method achieves better classification performance on test nodes. We also visualize the embeddings for each node class in \Cref{fig:4}, where our approach shows better clustering of node embeddings for each class than FRGNN.

To further analyze the Lyapunov function, we visualize its values on the Cora dataset, as shown in \Cref{fig:5}. We sample vectors near the equilibrium point and calculate their corresponding Lyapunov function values. We reduce these vectors and the equilibrium point to two dimensions for visualization with t-SNE. Dimension 1 and Dimension 2 represent the reduced dimensions and the vertical axis shows the Lyapunov function values. The Lyapunov function values are zero near the equilibrium point and increase as they move away, indicating that our Lyapunov function meets the Lyapunov stability criterion. This phenomenon confirms that our controller can maintain the system output near the equilibrium point (i.e., groud truth).

\section{Conclusion}
In this paper, we propose a novel testing-time feature reconstruction method to enhance the robustness of graph neural networks to distribution shifts. We formalize GNNs as a control system, treating node features as controlled variables. By designing a neural controller to reconstruct these node features, we ensure that the predictions gradually stabilize near the ground truth. We introduce a neural Lyapunov function to aid in controller training, ensuring system stability and obtaining class representative embeddings. We evaluate our method across multiple datasets, demonstrating its effectiveness in reconstructing node features and enhancing GNN performance. 

Limitation: currently, the solving time of the solver remains a bottleneck for our method. We have to downsample the node features to reduce the number of parameters in the graph neural network, which may affect the model performance. In the future, we will explore more efficient methods to solve the Lyapunov constraints.

\section*{Appendix A}
\begin*{\textbf{Theorem 1:}}
    \label{thm2}
    Consider a closed-loop system $\hat{Y}_{t+1} = C_{f_{\theta}}(\hat{Y_t})$ with an equilibrium point at the $Y$, i.e. $C_{f_{\theta}}(Y) = Y$, where $C_{f_{\theta}}$ is a MLP with ReLU activation function. Suppose there exists a continuously differentiable function $V: \mathbb{R}^n \rightarrow \mathbb{R}$ that satisfies the following conditions:
    \begin{equation}
        \label{eq:71}
        V(Y) = 0, and, \forall \hat{Y} \in \mathcal{D}\setminus \{Y\}, V(\hat{Y}) > 0 \: and \: \Delta V \leq 0.
    \end{equation}
    where $\mathcal{D}$ is the domain of the control system and $\Delta V$ is defined as:
    \begin{equation}
        \label{eq:81}
        \begin{aligned}
            \Delta V = V(C_{f_{\theta}}(\hat{Y_t})) - V(\hat{Y_t}).
        \end{aligned}
    \end{equation}
    Then, the system is asymptotically stable at the $Y$ and $V$ is called a Lyapunov function.
\end*{theorem}

\begin{proof}
    We consider the neural network controller $f_{\theta}$ and the classifier $C$ as an MLP $C_{f_{\theta}}$ with $l+k$ layers. For an MLP with $l+k$ layers, we have:
    \begin{equation*}
		\begin{aligned}
			C^{l+k}_{f_{\theta}}(x)&=g^{(l+k)}(g^{(l+k-1)}(\cdots g^{(1)}(x)\cdots)),
		\end{aligned}
	\end{equation*}
    where $g^{(i)}$ represents the $i$-th layer of the MLP. For a ReLU activation function, $g^{i}$ can be expressed as:
    \begin{equation*}
        \begin{aligned}
            g^{(i)}(x) = ReLU(W^{(i)}x+b^{(i)}),
        \end{aligned}
    \end{equation*}
    For any $x_1, x_2 \in \mathbb{R}^n$, we have:
    \begin{equation*}
		\begin{aligned}
			&\lVert C^{l+k}_{f_{\theta}}(x_1)-C^{l+k}_{f_{\theta}}(x_2)\rVert _2\\
			&=\lVert g^{(l+k)}(g^{(l+k-1)}(\cdots g^{(1)}(x_1)\cdots))-\\
			&g^{(l+k)}(g^{(l+k-1)}(\cdots g^{(1)}(x_2)\cdots))\rVert _2\\
			&\leq \lVert W^{l+k}g^{(l+k-1)}(\cdots g^{(1)}(x_1)\cdots)+B^{l+k}\\
            &-W^{l+k}g^{(l+k-1)}(\cdots g^{(1)}(x_2)\cdots) - B^{l+k}\rVert _2\\
            &(\text{ReLU Lipschitz Condition})\\
			&\leq \lVert g^{(l+k-1)}(\cdots g^{(1)}(x_1)\cdots)-\\
			&g^{(l+k-1)}(\cdots g^{(1)}(x_2)\cdots)\rVert _2\cdot \lVert W^{(l+k)}\rVert _2\\
            &(\text{Cauchy-Schwarz Inequality})\\
			&\leq \cdots\\
			&\leq \lVert x_1-x_2\rVert _2\cdot \prod_{i=1}^{l+k}\lVert W^{(i)}\rVert _2\\
            &= L\lVert x_1-x_2\rVert _2.
		\end{aligned}
	\end{equation*}
    Therefore, $C_{f_{\theta}}$ is locally Lipschitz continuous. For a locally Lipschitz continuous function, we can apply the Lyapunov stability criterion. Refer to ~\cite{10.5555/578807}, we rewritten the Lyapunov stability criterion as follows:
    \begin{equation*}
        \begin{aligned}
            V(x_0) = 0, and, \forall x \in \mathcal{D}\setminus \{x_0\}, V(x) > 0 \: and \: \Delta V \leq 0,
        \end{aligned}
    \end{equation*}
    where $x_0$ is the equilibrium point, $V$ is the Lyapunov function, $D$ is the domain of the control system, and $\Delta V$ is defined as:
    \begin{equation*}
        \begin{aligned}
            \Delta V = V(F(x)) - V(x),
        \end{aligned}
    \end{equation*}
    where $F$ is locally Lipschitz continuous.We replace $F$ with $C_{f_{\theta}}$ and $x$ with $\hat{Y_t}$, then we have:
    \begin{equation*}
        \begin{aligned}
            V(\hat{Y}) = 0, and, \forall \hat{Y} \in \mathcal{D}\setminus \{Y\}, V(\hat{Y}) > 0 \: and \: \Delta V \leq 0.
        \end{aligned}
    \end{equation*}
    When the above conditions are satisfied, the system is asymptotically stable at the equilibrium point $Y$. This completes the proof.

\end{proof}

\begin{table*}[ht]
    \centering
    \caption{Overall datasets statistics}
    \begin{tabular}{cccccc}
    \hline
    Dataset & Cora & Citeseer & Pubmed & Amazon Photo & Amazon Computers\\ \hline
    \#Nodes & 2,708 & 3,327 & 19,717 & 7487 & 13381\\
    \#Edges & 5,429 & 4,732 & 44,338 & 119043 & 245778\\
    \#Features & 1,433 & 3,703 & 500 & 745 & 767\\
    \#Classes & 7 & 6 & 3 & 8 & 10\\
    \#Label Rate & 5.2\% & 3.6\% & 0.3\% & 2.14\% & 1.49\%\\ \hline
    \#Train Nodes & 140 & 120 & 60 & 160 & 200\\
    \#Val Nodes & 500 & 500 & 500 & 240 & 300\\
    \#Test Nodes & 1,000 & 1,000 & 1,000 & 7250 & 13252\\ \hline
    \label{tab2}
    \end{tabular}
\end{table*}

\section*{Appendix B}

We summarize the statistics of the datasets used in our experiments in \Cref{tab2}. The datasets include Cora, Citeseer, Pubmed, Amazon Photo, and Amazon Computers. Each dataset contains a different number of nodes, edges, features, and classes. The label rate of each dataset is also provided. 

\section*{Appendix C}

MH-AUG improves model generalization through adaptive data augmentation. KDGA enhances the model's adaptability to distribution shift by training a teacher model with augmented data and distilling its knowledge into a student model. FRGNN utilizes the predictions of a well-trained GNN and node features to train an MLP, which generates class representative embeddings using the one-hot embedding of each class. These features are then used to replace the features of labeled nodes, improving robustness against distribution shift.

\bibliographystyle{IEEEtran}
\bibliography{aaai24}

\end{document}